\renewcommand{\paragraph}[1]{\noindent\textbf{#1}}
\newcolumntype{R}[2]{%
    >{\adjustbox{angle=#1,lap=\width-(#2)}\bgroup}%
    l%
    <{\egroup}%
}
\newcommand{\algname}{\textsc{CS-Shapley}\xspace}
\newtheorem{definition}{Definition}
\newtheorem{theorem}{Theorem}
\newtheorem{remark}{Remark}
\title{\algname: Class-wise Shapley Values for Data Valuation in Classification}
\author{
Stephanie Schoch$^{\ast}$\quad Haifeng Xu$^{\dagger}$\quad Yangfeng Ji$^{\ast}$\\
  $^{\ast}$Department of Computer Science, University of Virginia, Charlottesville, VA 22904\\
  $^{\dagger}$Department of Computer Science, University of Chicago, Chicago, IL 60637
}
\begin{document}

\maketitle

\begin{abstract}
    Data valuation, or the valuation of individual datum contributions, has seen growing interest in machine learning due to its demonstrable efficacy for tasks such as noisy label detection. 
    In particular, due to the desirable axiomatic properties, several Shapley value approximation methods have been proposed. 
    In these methods, the value function is typically defined as the predictive accuracy over the entire development set. 
    However, this limits the ability to differentiate between training instances that are helpful or harmful to their own classes. 
    Intuitively, instances that harm their own classes may be noisy or mislabeled and should receive a lower valuation than helpful instances. 
    In this work, we propose \algname, a Shapley value with a new value function that discriminates between training instances' in-class and out-of-class contributions.
    Our theoretical analysis shows the proposed value function is (essentially) the unique function that satisfies two desirable properties for evaluating data values in classification.
    Further, our experiments on two benchmark evaluation tasks (data removal and noisy label detection) and four classifiers demonstrate the effectiveness of \algname over existing methods. 
    Lastly, we evaluate the ``transferability'' of data values estimated from one classifier to others, and our results suggest Shapley-based data valuation is transferable for application across different models.
\end{abstract}
\section{Introduction}
\label{sec:intro}
Data valuation methods aim to quantify the contribution of each datum to the predictive performance of a learning model. Among these, Shapley values have been proposed as a means to identify helpful or harmful data \citep{ghorbani2019data, jia2019towards}. A number of approximations and extensions for Shapley-based data valuation have been developed, with demonstrable efficacy for tasks such as mislabeled or noisy example detection and data selection 
\citep{ghorbani2019data, jia2019towards, kwon2021beta, ghorbani2020distributional, jia2021scalability}. The performance gains of Shapley-based approaches over alternative data valuation methods have typically been attributed to the axiomatic basis of Shapley values that satisfies fairness guarantees from cooperative game theory. 
Importantly, Shapley values rest on an underlying assumption that a game is well-represented by its value function~\citep{shapley1953value}.

The value function of prior Shapley-based data valuation methods has typically been defined as the predictive accuracy over the entire development set. 
However, in the context of valuing data for learning models on classification tasks, this may have limited ability to differentiate helpful or harmful training instances. 
Consider the case where we want to evaluate the value of data points $i$ and $j$ for a binary classification task, where both points belong to class $1$.
As shown in the real world example provided in \autoref{fig:acc_change}, if the predictive accuracy on the development set is the same when adding each point individually, then the contribution of these two data points is considered to be equivalent.
However, \textit{how} $i$ and $j$ contribute to the classifier differs. 
To be specific, the contribution of data point $i$ to class 1 is positive (helpful), while the contribution of $j$ to class 1 is negative (harmful). 
Similar distinction between training instances that are ``helpful'' and ``harmful'' to their own class has previously been used for post-hoc analysis in prior data contribution literature, such as influence functions \citep{koh2017understanding, pruthi2020estimating}.

\begin{figure*}
\centering
\begin{minipage}[t]{0.45\linewidth}
\includegraphics[width=\textwidth]{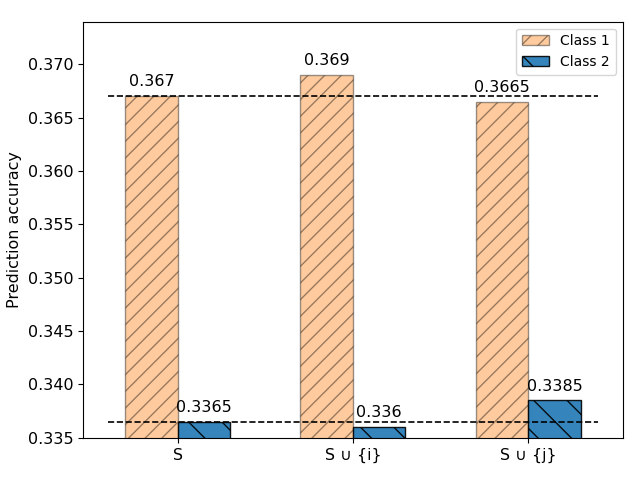}
\end{minipage}%
\hfill
\vspace*{-4mm}
\begin{minipage}[b]{0.53\linewidth}
\caption{\label{fig:acc_change}Development accuracy by class when adding two different points, $i$ and $j$, to the training set of a binarized version of CIFAR10, using logistic regression (the experiment setup is provided in \autoref{sec:exp}). Both points belong to class 1 and produce the same overall development accuracy change. However, $i$ increases the in-class accuracy, and $j$ decreases the in-class accuracy. If measuring contribution using the overall predictive accuracy, $i$ and $j$ will have equivalent contributions. In contrast, by differentiating between in-class and out-of-class accuracy changes, the proposed value function considers $i$ to have a larger contribution than $j$.}
\end{minipage}
\end{figure*}

In this work, we propose a class-wise value function that differentiates between the contribution of a data point to its own class and to other classes.
Consider the running example in \autoref{fig:acc_change}, $i$ increases in-class accuracy, while $j$ decreases in-class accuracy. Intuitively, $i$ should receive a higher value than $j$, as $j$ could be a mislabeled, adversarial, or otherwise noisy instance.
Our proposed class-wise value function $v_{y}(S\cup \{i\})$ measures the contribution of data point $i$ based on its class label $y=1$, where the accuracy of class 1 is a measure of contribution of $i$ and the accuracy of class 2 is a weighting factor. 
The definition of this new value function is detailed in \autoref{sec:method}.
For the example in \autoref{fig:acc_change}, this new class-wise value function measures the contribution as $v_{1}(S\cup\{i\})>v_{1}(S\cup\{j\})$.
A key conceptual message of this paper is to demonstrate that such distinction of in-class and out-of-class accuracy not only leads to desirable theoretical properties for measuring data values in classification  (\autoref{sec:theory}) but also exhibits high efficacy in extensive empirical evaluations (\autoref{sec:exp}).

\paragraph{Contributions.} 
1) we propose a new value function that differentiates between in-class and out-of-class contribution for computing Shapley values on classification datasets; 
2) we theoretically show that this value function is essentially the unique choice --- up to some freedom to change a constant --- that satisfies two desirable properties for data valuation in classification; 
3) we perform a systematic evaluation on two benchmark tasks using four classifiers, nine datasets, and three baseline methods. Our results demonstrate that our method  outperforms existing methods across almost all experimental conditions; 
4) last but not least, we also propose a new evaluation task to measure the \emph{transferability} of data values estimated from different classifiers; using the proposed transferability task, we show that Shapley-based data value estimates can be transferred across classifiers, including transfer to neural models.\footnote{Code is available at \url{https://github.com/stephanieschoch/cs-shapley}}

\section{Related work}
\label{sec:background}

\paragraph{Data valuation methods.}
Shapley values are a foundational concept in cooperative game theory that ensures fair division of rewards in cooperative games \citep{shapley1953value}. 
In a machine learning setting, Shapley values have been applied to data valuation, i.e. quantifying the contribution of individual datum \citep{ghorbani2019data, jia2019towards}.
Exact computation of Shapley-based data values, however, requires exhaustively retraining and evaluating marginal contributions of every datum using every possible data subset. 
To circumvent this, Shapley-based data values have been approximated with methods such as truncated Monte-Carlo Sampling \citep{ghorbani2019data}, influence-based approximations of parameters changes \citep{jia2019towards}, and federated learning \citep{wang2020principled}. To our knowledge, our work is the first to consider Shapley values induced by a value function that discriminates between in-class and out-of-class accuracy.
In \autoref{sec:theory}, we theoretically analyze the desirable properties of class-wise Shapley values within the context of classification. 

Other work that builds upon Shapley-based data values includes using the context of the underlying data distribution to increase valuation stability \citep{ghorbani2020distributional, kwon2021efficient}, relaxing the Shapley efficiency axiom to reduce noise \citep{kwon2021beta}, and using $k$-nearest neighbor classifiers over pretrained feature embeddings as surrogates for larger models \citep{jia2021scalability}. 
Notably, there are alternative methods to measure data contribution such as the leave-one-out method \citep{cook1977detection}, influence functions \citep{koh2017understanding}, and reinforcement learning \citep{yoon2020data}, however, these methods have not been proven to share the fairness guarantees of Shapley values. 

\paragraph{Applications of Shapley-based data values.}
Prior work has demonstrated the benefits of using Shapley-based data values in many applications, such as mislabeled example detection \citep{tang2021data, ghorbani2019data, kwon2021beta}, data selection for transfer learning \citep{parvez2021evaluating} and active learning \citep{ghorbani2021data}, and data sharing \citep{sim2020collaborative, han2020replication}.
The core idea behind these applications is that the Shapley value of a training instance indicates its contribution to a trained predictive model.
By designing a new value function, our method aims to provide more effective estimates of data values and has the potential to apply to all of these applications.
For real-world applications, we recognize the computational challenge of estimating Shapley values directly from classifiers used in practice (e.g., neural network models).
Therefore, we also propose to systematically study the transferability of Shapley-based data values across different classifiers, in addition to evaluating on two benchmark evaluation tasks. 

\section{Proposed method: \algname}\label{sec:method}
\subsection{Preliminaries}\label{subsec:prelim}
Consider a training dataset $T=\{(x_i, y_i)\}_{i=1}^n$ that contains $n$ training instances. 
Let $\mathcal{A}$ denote a classification algorithm and
$v(S): 2^{T} \to \mathbb{R}$ be a value function that evaluates the value of any subset of data $S \subseteq T$. 
For classification tasks, $v(\cdot)$ is often considered to be the classification accuracy on a development set $D$ \citep{ghorbani2019data, kwon2021beta, jia2019towards, jia2021scalability}, 
and $v(S)$ represents the development accuracy $a_S(D)$ when the classifier is trained on $S$ and evaluated on $D$. 
For each data point $i$ in the training set, the Shapley value $\phi_i(T, \mathcal{A}, v)$ is defined as the average marginal contribution of $i$ to every possible subset $S \subseteq T\textbackslash\{i\}$: 

\begin{definition}[Data Shapley value \citep{shapley1953value, ghorbani2019data}]
Given a value function $v(\cdot)$, the Shapley value $\phi_i(T, \mathcal{A}, v)$ for any data point $i$  is defined as 
		\begin{equation}\label{eq:shapley-value}
			\phi_i(T, \mathcal{A}, v)= 
			\sum\limits_{S \subseteq T\textbackslash\{i\}} \frac{v(S\cup\{i\})-v(S)}{\binom{n-1}{\mid S \mid}}
		\end{equation}
\end{definition} 
When the dataset $T$, classification model $\mathcal{A}$, and value function $v$ are clear from the context, we simply use  $\phi_i$ to denote the Shapley value. 
Shapley values satisfy the following axioms \citep{shapley1953value}:
\vspace{-2mm}
\begin{itemize}[leftmargin=0.2in]
\itemsep -0.2em
    \item Symmetry: if for all  $S \subseteq T\backslash\{i,j\}, v(S\cup \{i\}) = v(S\cup \{j\})$, then $\phi_i=\phi_j$.
    \item Linearity: $\phi_i(v+w)=\phi_i(v)+\phi_i(w)$ for value functions $v$ and $w$.
    \item Null player: if for all $S\subseteq T\backslash\{i\}, v(S)=v(S\cup \{i\})$, then $\phi_i=0$.
    \item Efficiency: $v(T)=\sum_{i\in T}\phi_i$. 
\end{itemize}
\vspace{-2mm}
Prior work usually considers $v(\cdot)$ to be the predictive accuracy on the development set. Recalling the example in \autoref{fig:acc_change}, this may not be an ideal setting to discriminate between harmful (or noisy) and helpful instances. 
Notably, this limitation cannot be addressed simply by switching to another development set level metric such as F1, precision, or recall; we will further illustrate this with an example in Appendix B. This key drawback motivates  the development of a new value function, described in the following section, which has been designed to better differentiate between harmful and helpful instances.

\subsection{Class-wise data Shapley}
\label{subsec:cd-shapley}
Along the previous lines of discussion, we suggest data for classification may contain implicit, pre-existing coalitions based on class membership, which should be accounted for when evaluating contributions.
Motivated by this intuition, we propose a new value function that differentiates between the contribution of adding one instance to its own class vs. to other classes. 
The key idea behind our design is to use in-class accuracy as the measurement of contribution and out-of-class accuracy as a discounting factor. 
In this way, we gain the benefits of evaluating value on the class level, yet assure we do not assign high value to instances that may be detrimental to the out-of-class performance.

\paragraph{Class-wise value function.}
Consider the problem of estimating the contribution of a data point $i$, $(x_i, y_i)$, given a subset of training instances $S\subseteq T\backslash\{i\}$, and a development set $D$. 
To define a class-wise value function, we need to partition $D$ into two subsets $D_{y_i}$ and $D_{-y_i}$.
$D_{y_i}$ contains the development instances with the class label $y_i$ and $D_{-y_i}$ contains the development instances with the other labels. 
For multi-class classification, $D_{-y_i}$ has all the instances with labels other than $y_i$.
Similarly, we have $S_{y_i}$ and $S_{-y_i}$ with $S=S_{y_i}\cup S_{-y_i}$.
To measure the contribution of data point $i$ to its own class $y_i$ and to the other classes $-y_i$, we define two separate accuracy numbers, in-class accuracy $a_S(D_{y_i})$ and out-of-class accuracy $a_S(D_{-y_i})$, as the following
\begin{equation}
    \label{eq:cd-acc}
    {\small
    a_S(D_{y_i})= \frac{\text{\# of correct predictions in }D_{y_i}}{|D|},\quad
    a_S(D_{-y_i})= \frac{\text{\# of correct predictions in }D_{-y_i}}{|D|}
    }
\end{equation} 
Note that since $a_S(D_{y_i})$ and $a_S(D_{-y_i})$ share the same denominator, we have $a_S(D_{y_i}) + a_S(D_{-y_i}) = a_S(D)$, which is the accuracy on the whole development set.
With $a_S(D_{y_i})$ and $a_S(D_{-y_i})$, our class-wise value function is defined as 
\begin{equation}
\label{eq:val-func}
    v_{y_i}(S_{y_i}|S_{-y_i}) = a_S(D_{y_i})\cdot e^{a_S(D_{-y_i})}
\end{equation}

\begin{wrapfigure}{l}{0.35\textwidth}
\centering
\includegraphics[width=0.35\textwidth]{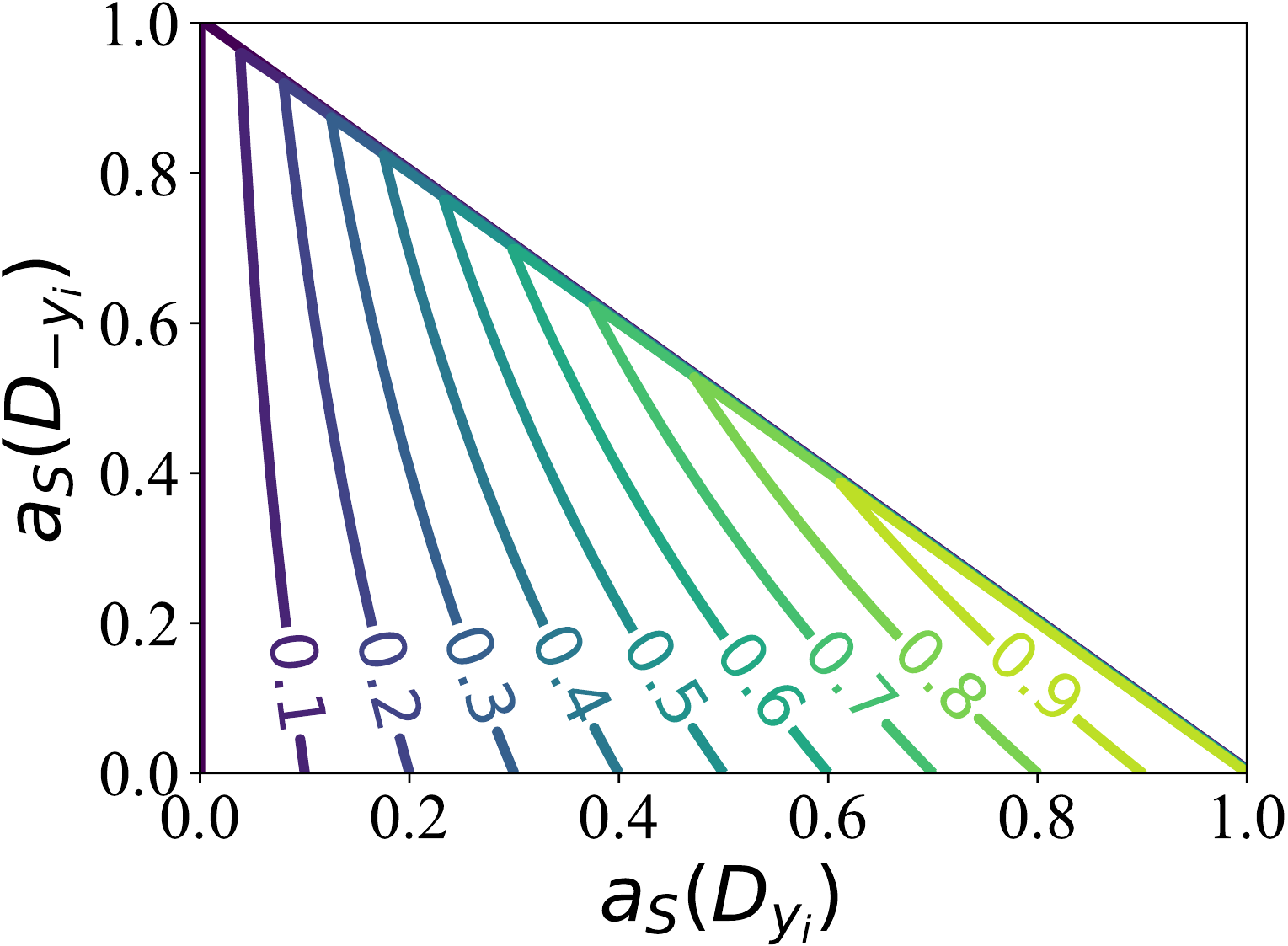}
\caption{\label{fig:contour} Contour plot of $v_{y_i}(S)$.}
\end{wrapfigure}
\autoref{fig:contour} visualizes the contour plot of $v_{y_i}(S)$ based on different $a_S(D_{y_i})$ and $a_S(D_{-y_i})$. 
Between the two variables used in the value function, the significant factor is the in-class accuracy $a_S(D_{y_i})$. 
The effect of the out-of-class accuracy $a_S(D_{-y_i})$ is controlled by the value of $a_S(D_{y_i})$.
Particularly, when $a_S(D_{y_i})$ is small, the effect of $a_S(D_{-y_i})$ can be ignored. 
To better understand how this value function works, assume $a_S(D_{y_i})=0.1$, which indicates class $y_i$ is difficult to learn. 
Under this condition, the value of adding an instance in this class is primarily from the prediction performance improvement of its own class, rather than that of other classes. 
This is a desirable property of the class-wise value function, which will be formally defined in \autoref{sec:theory}.

\paragraph{Class-wise Shapley values.}
With the new value function, the \textbf{C}lass-wi\textbf{S}e Shapley (\algname) value of instance $i$ conditioned on any out-of-class ``environment'' $S_{-y_i}$ is defined as 
\begin{equation}
\label{eq:cd-shapley}
 \phi_i|S_{-y_i} = 
    \sum_{S_{y_i} \subseteq T_{y_i} \setminus \{i\}} \frac{v_{y_i}(S_{y_i}\cup\{i\}| S_{-y_i})-v_{y_i}(S_{y_i}| S_{-y_i})}{\binom{n-1}{\mid S_{y_i} \mid}}.
\end{equation}
To compute the marginal \algname value of instance $i$, we then simply average over all possible environmental data $S_{-y_i} \subseteq T_{-y_i}$ with equal weight, which leads to our following definition of the \emph{Canonical \algname}
\begin{equation}\label{def:ultimate-value}
\phi_i = \frac{1}{2^{|T_{-y_i}|}} \sum_{S_{-y_i}}[\phi_i|S_{-y_i}]
\end{equation}
We remark that the word ``canonical'' here refers to our simple choice of equal weight $\frac{1}{2^{|T_{-y_i}|}}$ for each sampled out-of-class environment $S_{-y_ii} \subseteq T_{-y_i}$. More generally, one could possibly consider non-canonical and more sophisticated weights, e.g. weights depending on the size of $S_{-y_i}$. However, it turns out that the canonical choice in Equation \eqref{def:ultimate-value} already performs very well in our experiments. Following the principle of Occam's Razor, we thus will stick with this canonical form for the remainder of this paper.  

\paragraph{Algorithm.}
Exactly computing $\phi_i$ in Equation \eqref{def:ultimate-value} requires averaging over exponentially many $S_{-y_i}$, which is computationally prohibitive. Thus we use a relatively small number of subsets $S_{-y_i}\subseteq T_{-y_i}$ for approximating $\phi_i$ 
\begin{equation}
\label{eq:value-app}
 \phi_i \approx \frac{1}{K} \sum_{
S^{(k)}_{-y_i}\subseteq T_{-y_i}; k\in\{1,..,K\}}
[\phi_i|S^{(k)}_{-y_i}].
\end{equation}
In our implementation, we use $K=500$. Such  approximation via samples is widely used in previous works \citep{kwon2021beta}, and has been proven to give good approximations under structural assumptions about the value function \cite{liben2012computing,castro2009polynomial}. 
Although the description above only talks about a single instance, the actual implementation of the algorithm is much more efficient, if we compute the values per class. The detailed implementation of our algorithm can be found in the pseudo-code deferred to Appendix A. 
At a high level, for any given class label $y$, the algorithm first samples a subset $S_{-y}$ from $T_{-y}$.
Then, for all the examples in class $y$, we adopt the truncated Monte Carlo algorithm \citep{ghorbani2019data} to estimate the conditional class-wise Shapley values defined in Equation \eqref{eq:cd-shapley}. 
By repeating this procedure $K$ times, the \algname value estimation is done by Equation \eqref{eq:value-app}. 
Before switching to another class, we normalize the estimated Shapley values by the in-class accuracy when using the whole training set to satisfy the efficiency axiom. 

\section{Theoretical justifications of the value function choice}\label{sec:theory}
In this section, we carry out a theoretical analysis to provide insight and justifications about our approach.  
We will formally prove that, to fulfil some desirable properties of a class-wise value function, the form that we adopt in Equation \eqref{eq:val-func} is essentially the unique choice, up to the choice of the basis of the exponential function.

To distinguish the accuracy from the in-class and out-of-class development set, we start by assuming that the value function is  \emph{separable}  and has the following generic form for any subset of data $S\subseteq T$: 
\begin{equation}\label{eq:def-seperable}
  v_{y_i}(S) = f(a_S(D_{y_i}))\cdot g(a_S(D_{-y_i}))
\end{equation}
where $f, g$ are naturally assumed to be \emph{continuous} and \emph{monotone increasing} functions. For normalization reasons, without loss of generality, we further assume $f(0)g(0) = 0$.  Next, we describe two additional desirable properties of the value function  
on any development set $D$:
\begin{itemize}[leftmargin=0.2in]
\item \textbf{Property 1: Priority of In-class Accuracy} (i.e.,  $a_S(D_{y_i})$). Specifically,  for any $a_S(D_{y_i}) >  0$, we have $f(a_S(D_{y_i}))g(0) >  f(0)g(1)$.  
\item \textbf{Property 2: In-class Value Additivity and Out-of-class Weight Discounting}. Specifically,  for any \emph{partitions} of in-class development set $D_{y_i}=D_{y_i,1}\cup D_{y_i,2}$ and out-of-class development $D_{-y_i}=D_{-y_i,1}\cup D_{-y_i,2}$, we have
  \begin{eqnarray}
    f(a_S(D_{y_i}))\cdot g(a_S(D_{-y_i}))    &=& f(a_S(D_{y_i,1}))\cdot g(a_S(D_{-y_i,1}))\cdot g(a_S(D_{-y_i,2}))\nonumber\\
                              & & + f(a_S(D_{y_i,2}))\cdot g(a_S(D_{-y_i,1}))\cdot g(a_S(D_{-y_i,2}))
  \end{eqnarray}
\end{itemize}

The first property above tries to formalize the intuition that  in-class accuracy should be prioritized. Concretely, the value function for getting positive in-class accuracy $a_S(D_{y_i})$ and $0$ out-of-class accuracy   is no less than getting even perfect out-of-class accuracy but $0$ in-class accuracy. The following theorem shows that this property is the underlying reason of the observed contour line in Figure~\ref{fig:contour}. This also justifies the adoption of  Property 1.  

\begin{theorem}
 Suppose the value function defined in Equation \eqref{eq:def-seperable} satisfies the  property of \emph{Priority of In-class Accuracy}, then no contour lines will intersect the axis of $a_S(D_{-y_i})$, except the special line for  $f(a_S(D_{y_i}))\cdot g(a_S(D_{-y_i})) = 0$.\footnote{Figure~\ref{fig:contour} is an example of such contour lines.}
\end{theorem}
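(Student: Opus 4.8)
Write $\alpha = a_S(D_{y_i})$ and $\beta = a_S(D_{-y_i})$ for brevity, so that the value function is $v_{y_i} = f(\alpha)g(\beta)$ and the ``axis of $a_S(D_{-y_i})$'' is the segment $\{\alpha = 0,\ \beta \in [0,1]\}$. The plan is to reduce the whole statement to the single algebraic fact that $f(0) = 0$, which follows from combining the normalization assumption $f(0)g(0) = 0$ with \emph{Priority of In-class Accuracy}; once $f(0) = 0$, the axis collapses entirely onto the zero level set and the conclusion is immediate.

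First I would split on the normalization condition $f(0)g(0) = 0$: either $f(0) = 0$ or $g(0) = 0$. Suppose, toward a contradiction, that $f(0) > 0$; then necessarily $g(0) = 0$, and since $g$ is monotone increasing, $g(1) \ge g(0) = 0$. Applying \emph{Priority of In-class Accuracy} with any $\alpha > 0$ gives $f(\alpha)g(0) > f(0)g(1)$, i.e.\ $0 > f(0)g(1)$; but the right-hand side is $\ge 0$ because $f(0) > 0$ and $g(1) \ge 0$, a contradiction. Hence $f(0) = 0$.

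Next, with $f(0) = 0$, every point of the axis has value $v_{y_i} = f(0)g(\beta) = 0$, so the axis is contained in the contour $\{f(\alpha)g(\beta) = 0\}$. Therefore any contour line at a level $c \neq 0$ is disjoint from the axis: a common point would have $\alpha = 0$ and hence value $0 \neq c$. This is precisely the claim, the lone exception being the zero-level contour itself, exactly as stated.

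I expect the only delicate point to be the case analysis on the normalization together with the use of monotonicity to conclude $g(1) \ge 0$ (equivalently, that $g \ge 0$ on $[0,1]$ once $g(0) = 0$); continuity of $f$ and $g$ is not actually needed here, since the argument is purely about the level sets. Everything else is routine.
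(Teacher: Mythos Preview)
Your argument is correct and reaches the same conclusion, but the route differs from the paper's. The paper does not first isolate $f(0)=0$; instead it takes the limit $\alpha\to 0$ in the \emph{Priority of In-class Accuracy} inequality (using continuity of $f$) to obtain $f(0)g(0)=0\ge f(0)g(1)$, and then derives a contradiction directly from the chain $0<c=f(0)g(y)\le f(0)g(1)\le 0$ for a hypothetical intersection point $(0,y)$. Your version replaces that limit step by a case split on the normalization $f(0)g(0)=0$ together with monotonicity of $g$, which lets you conclude $f(0)=0$ without ever invoking continuity---exactly as you observe in your final remark. The upshot is that your proof is strictly more elementary (it works under weaker hypotheses) and, by pinning down $f(0)=0$, it simultaneously handles contour levels $c<0$ as well as $c>0$, whereas the paper's contradiction is phrased only for $c>0$. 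Both arguments tacitly use $f(0)\ge 0$ (you when passing from ``$f(0)\neq 0$'' to ``$f(0)>0$'', the paper when multiplying $g(y)\le g(1)$ by $f(0)$), so neither is more exposed on that front.
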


\begin{proof}[Proof of Theorem 1] 
 By the property of \emph{Priority of In-class Accuracy}, we know  $f(a_S(D_{y_i}))g(0) >   f(0)g(1)$   for any $a_S(D_{y_i}) > 0$. By taking the limit of letting $a_S(D_{y_i}) \to 0$ in the above inequality, we have $\lim_{a_S(D_{y_i}) \to 0 } f(a_S(D_{y_i}))g(0) = f(0)g(0) = 0 \geq  f(0)g(1) $. 
 
Next, we prove the theorem by   contradiction. Suppose, for the purpose of contradiction, that there exists a $c>0$ such that its contour line intersects the axis of $a_S(D_{-y_i})$ at some  point $(0,y)$ for some $y \leq 1$. Then we have $f(0)g(y) = c$ by the definition of contour line. This however yields the following contradicting inequalities:
\begin{equation}
0 < c = f(0)g(y) \leq f(0)g(1) \leq 0
\end{equation}
where the last inequality is proved at the beginning of this proof. Therefore, it must be the case that the only countour line that can intersect the $a_S(D_{-y_i})$ is the  $f(a_S(D_{y_i}))\cdot g(a_S(D_{-y_i})) = 0$ line. This concludes our proof.
\end{proof} 

 The intuition behind the second property is based on the role of $f$ and $g$ in the definition.
As a value measurement on the target class, $f(a_S(D_{y_i}))$ is expected to be the sum of the value of any two non-overlapped splits of $D_{y_i}$.
In addition, as a weighting function $g$, the effect of $a_S(D_{-y_i})$ should be equivalent to applying the weights from $a_S(D_{-y_i,1})$ and $a_S(D_{-y_i,2})$ separately.

\autoref{thm:unique-val} shows that our previously defined value function $v_{y_i}(S) =  v_{y_i}(S_{y_i}|S_{-y_i}) = a_S(D_{y_i})\cdot e^{a_S(D_{-y_i})}$ is (essentially) the only choice that satisfies the two desirable properties above. This theoretically justifies our choice of the value function.  

\begin{theorem}\label{thm:unique-val}
If the value function satisfies both   Property 1 and 2 above, then it must have the form $v_{y_i}(S) = c' a_S(D_{y_i})\cdot c^{a_S(D_{-y_i})}$ for some   constant $c>1, c'>0$.\footnote{We ignored the trivial situation that $c'=0$ or $c=1$, which is not interesting.} 
\end{theorem}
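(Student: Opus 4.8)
The plan is to reduce Property 2 to two classical Cauchy functional equations and then invoke uniqueness of their continuous/monotone solutions. First I would introduce free variables for the four sub-accuracies: write $p=a_S(D_{y_i,1})$, $q=a_S(D_{y_i,2})$, $r=a_S(D_{-y_i,1})$, $s=a_S(D_{-y_i,2})$. Because all four share the common denominator $|D|$, we have $a_S(D_{y_i})=p+q$ and $a_S(D_{-y_i})=r+s$, so Property 2 reads $f(p+q)\,g(r+s)=\bigl(f(p)+f(q)\bigr)g(r)g(s)$. As $S$, the classifier, and the development set vary (with $|D|$ growing), the tuples $(p,q,r,s)$ become dense in the simplex $\{p,q,r,s\ge 0,\ p+q+r+s\le 1\}$, so by continuity of $f$ and $g$ this identity holds throughout that simplex.

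Next I would pin down $f(0)$. The normalization gives $f(0)g(0)=0$. I would rule out $g(0)=0$: setting $r=0$ in the identity gives $f(p+q)g(s)=\bigl(f(p)+f(q)\bigr)g(0)g(s)=0$ for all feasible $p,q,s$, and a short argument using continuity and monotonicity of $f,g$ shows this forces $v_{y_i}\equiv 0$, the trivial case we exclude. Hence $g(0)\ne 0$ and therefore $f(0)=0$. Now set $q=0$: since $f(0)=0$, the identity becomes $f(p)g(r+s)=f(p)g(r)g(s)$; picking any $p$ with $f(p)\ne 0$ (one exists unless $f\equiv 0$, again trivial) yields the exponential Cauchy equation $g(r+s)=g(r)g(s)$ on the feasible region. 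A continuous solution of this restricted equation on $[0,1]$ has the form $g(x)=c^{x}$, and since $g$ is monotone increasing and nonconstant, $c>1$ (this is also consistent with $g(0)=1\ne 0$).

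Substituting $g(x)=c^{x}$ back into the full identity gives $f(p+q)c^{r+s}=\bigl(f(p)+f(q)\bigr)c^{r}c^{s}=\bigl(f(p)+f(q)\bigr)c^{r+s}$; dividing by $c^{r+s}>0$ leaves the additive Cauchy equation $f(p+q)=f(p)+f(q)$. Its continuous (equivalently, monotone) solutions on the simplex are linear, $f(x)=c'x$, and monotonicity with nontriviality forces $c'>0$. This yields $v_{y_i}(S)=c'\,a_S(D_{y_i})\cdot c^{a_S(D_{-y_i})}$ with $c>1$, $c'>0$, as claimed. As a final sanity check I would verify that Property 1 is then automatically satisfied, since $f(a)g(0)=c'a>0=f(0)g(1)$ for $a>0$; its operational role in the argument is only to exclude the degenerate constant/zero solutions permitted by weak monotonicity.

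The step I expect to be the main obstacle is the domain bookkeeping: the classical uniqueness theorems for the exponential and additive Cauchy equations are usually stated on $\mathbb{R}$ or a half-line, whereas here the equations are only known on a bounded simplex (and, strictly speaking, only at rational accuracy values for each fixed $|D|$). The crux is to argue carefully --- via continuity of $f,g$ and density of realizable accuracies --- that this restricted information still determines $g$ and $f$ uniquely, together with the routine but fiddly enumeration of the degenerate cases ($f\equiv 0$; $g(0)=0$; $f$ or $g$ constant) and the handling of strict versus weak monotonicity of $f$ and $g$.
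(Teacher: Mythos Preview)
Your proposal is correct and follows essentially the same route as the paper: reduce Property~2 to the multiplicative Cauchy equation for $g$ and the additive Cauchy equation for $f$, then invoke the standard continuous/monotone solutions.

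The one noteworthy difference is how the boundary condition $f(0)=0$ is obtained. The paper uses Property~1 directly: taking $a_S(D_{y_i})\to 0$ in the strict inequality $f(a)g(0)>f(0)g(1)$ yields $0=f(0)g(0)\geq f(0)g(1)$, and then monotonicity of $g$ squeezes $f(0)g(y)=0$ for all $y$, forcing $f(0)=0$. You instead set $r=0$ in Property~2 and argue that $g(0)=0$ collapses $v$ to the trivial function, so $g(0)\neq 0$ and hence $f(0)=0$ from the normalization. Both are valid; your route makes it transparent that Property~1's role is really just to exclude the degenerate families (constant $g$, vanishing $f$) and to secure the strict inequality $c>1$, which is exactly how you cast it in your final sanity check. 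The paper's route, on the other hand, invokes Property~1 more centrally and does not need to enumerate the degenerate branch separately. Your additional care about the domain of the Cauchy equations --- that the identities are only known on a bounded simplex and at rational accuracy values, with continuity bridging the gap --- is a genuine technical point that the paper glosses over; flagging it as the main obstacle is appropriate, though in practice the standard extension argument (iterate the functional equation to get $g(nr)=g(r)^n$ on a dense set, then use continuity) goes through without difficulty.
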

\begin{remark}
The re-scaling constant $c'$ in the above theorem will not affect the value much. What truly matters in the function format is the parameter $c$, which affects how fast the weight function $g(\cdot)$ changes. Our value function choice picked  $c$ as the natural number $e$. 
\end{remark}
\begin{proof}[Proof of Theorem \ref{thm:unique-val}] 
The non-trivial part of the proof is to first prove $f(0) = 0$ and $g(0) = 1$, which are not clear in hindsight even given the two properties above. With these two ``boundary'' conditions, we will then be able to pin down the concrete format of $f$ and $g$.

Letting $a_S(D_{y_i}) \to 0$, we first have $\lim_{a_S(D_{y_i}) \to 0 } f(a_S(D_{y_i}))g(0) = f(0)g(0) = 0$ which is at least $f(0)g(1) $ due to the Property 1. By monotonicity, we have for any $y \in [0, 1]$ 
\begin{equation}
    0 = f(0)g(0) \leq f(0) g(y) \leq f(0) g(1) \leq 0
\end{equation}
This implies that the inequalities above must all be tight, and thus $f(0)g(y) = 0$ for any $y$. Since $g(y)$ is not always $0$, this implies $f(0) = 0$. 

With $f(0)=0$ as proven above,  we are now ready to pin down the format of $g(\cdot)$.  Then under the special case that $D_{y_i, 1} = \emptyset$, we have $ f(a_S(D_{y_i,1})) = f(0) = 0$ and thus the  second property becomes 
  \begin{equation}
  f(a_S(D_{y_i}))\cdot g(a_S(D_{-y_i}))   =   f(a_S(D_{y_i}))\cdot g(a_S(D_{-y_i,1}))\cdot g(a_S(D_{-y_i,2}))
 \end{equation} 
 for any $D_{-y_i}=D_{-y_i,1}\cup D_{-y_i,2}$. Plugging any $D_{y_i}$ such that  $ f(a_S(D_{y_i})) \not = 0$ into the above equality, we thus have 
     \begin{equation*}
  g(a_S(D_{-y_i}))   =  g(a_S(D_{-y_i,1}))\cdot g(a_S(D_{-y_i,2})). 
 \end{equation*} 
Since $a_S(D_{-y_i}))   =   a_S(D_{-y_i,1})+ a_S(D_{-y_i,2})$, this implies $\log \big(  g(a_S(D_{-y_i})) \big)$ is an additive function. That is, there exists $c''$ such that $\log \big(  g(a_S(D_{-y_i})) \big) = c'' a_S(D_{-y_i})$,  or equivalently, $g(a_S(D_{-y_i})) = e^{c''a_S(D_{-y_i})} = c^{a_S(D_{-y_i})}$ for $c = e^{c''} > 1$. 

Finally, we prove the format of $f(\cdot)$. The above proof for $g(\cdot)$  implies $g(a_S(\emptyset)) = c^{a_S(\emptyset)} = c^0 = 1 $. Therefore, under the special case that $D_{-y_i} = \emptyset$,  the  second property becomes 
  \begin{equation}
  f(a_S(D_{y_i}))    =   f(a_S(D_{y_i,1})) +  f(a_S(D_{y_i,2})) 
 \end{equation} 
 for any $D_{y_i}=D_{y_i,1}\cup D_{y_i,2}$. That is, $f$ must be an increasing linear function and thus there is a positive $c'$ such that $f(a_S(D_{y_i})) = c'\times a_S(D_{y_i})$. This concludes the proof of the theorem.
 \end{proof}

\section{Experiments}\label{sec:exp}
\subsection{Experiment setup}\label{subsec:setup}
To compare with prior Shapley-based data valuation methods, we adopted most of the experiment setup from prior work, detailed along with other implementation details in Appendix A. In this section, we highlight some important details.

\paragraph{Baseline methods.}
We compare \algname against three baselines: Data Shapley with the Truncated Monte Carlo approximation (TMC) \citep{ghorbani2019data}, Beta Shapley \citep{kwon2021beta}, and Leave-One-Out (LOO) \citep{cook1977detection}.
For Beta Shapley, we used the best $\alpha$ and $\beta$ values suggested in the original paper, which were also verified by our preliminary hyperparameter search.
Note that another popular baseline method, KNN-Shapley \citep{jia2019efficient}, is also essentially covered by applying the data Shapley method to KNN classifiers.

\paragraph{Evaluation tasks.} 
We adopted two benchmark evaluation tasks from prior work: high-value data removal and noisy label detection \citep{ghorbani2019data, kwon2021beta}.
In addition, we propose a new evaluation task to quantify the transferability of data value estimates across classifiers, to reveal a potential solution for mitigating the computational cost of estimating Shapley values for neural models.

\paragraph{Datasets and classifiers.} 
We use nine benchmark datasets: Diabetes, CPU, Click, Covertype, CIFAR10 (binarized), FMNIST (binarized), MNIST (multi-class and binarized versions, denoted using -2 and -10, respectively), and Phoneme. When creating data subsets, we keep the original label distribution, instead of creating balanced subsets as in prior work.
In addition, for each dataset and evaluation task, we systematically test the data valuation performance on four classifiers: logistic regression, SVM with the RBF kernel, KNN, and a gradient boosting classifier. 
We also include a multi-layer perceptron (MLP) as a target classifier to test the transferability of data values, since computing Shapley values with this classifier is prohibitively expensive. 

\paragraph{Summary of experiments in appendix:} Due to page limits, we report representative results in the main content and all additional results in Appendix C.
\subsection{High-value data removal}\label{subsec:removal}
Following the setup in prior work \cite{ghorbani2019data}, for each valuation method,  we gradually remove training instances from the highest value to the lowest value. 
At each removal step, we retrain the classifier and evaluate predictive performance on the held-out test data. 
Training instances with high value estimates should be helpful for model performance, so we measure the performance of each method with the accuracy drop following their removal. 
We follow prior work and plot the accuracy drop for up to 50\% train data removed. 
To further quantify the performance differences observed in the plots, we also introduce a novel metric named \textit{weighted accuracy drop}.

\begin{table}[t!]
\caption{\label{table:removal-combined} Weighted accuracy drop for Logistic Regression and SVM-RBF using \algname (CS), TMC-Shapley (TMC), Beta Shapley (Beta), and Leave-One-Out (LOO).}
\centering
\small
\vspace*{2mm}
\begin{tabular}{p{0.15\textwidth}ccccccccc}
  \toprule
  \multirow{2}{*}{Dataset} & \multicolumn{4}{c}{Logistic Regression} && \multicolumn{4}{c}{SVM-RBF}\\
  \cmidrule{2-5}
  \cmidrule{7-10}
     & CS & TMC & Beta & LOO && CS & TMC & Beta & LOO\\
    \midrule
    CIFAR10 & \textbf{0.119} & 0.108 & 0.062 & 0.059 && \textbf{0.114} & 0.098 & 0.069 & 0.089 \\
    Click & \textbf{0.053} & 0.007 & 0.017 & 0.016 && 0.004 & 0.004 & 0.004 & 0.004\\
    Covertype & \textbf{0.293} & 0.250 & 0.112 & 0.183 && 0.193 & \textbf{0.214} & 0.175 & 0.193\\
    CPU & 0.036 & 0.022 & 0.029 & \textbf{0.040} && \textbf{0.028} & 0.027 & 0.021 & 0.004\\
    Diabetes & \textbf{0.114} & 0.059 & 0.038 & 0.062 && \textbf{0.106} & 0.037 & 0.022 & -0.002\\
    FMNIST & \textbf{0.091} & 0.082 & 0.038 & 0.062 && \textbf{0.077} & 0.048 & 0.032 & 0.028\\
    MNIST-2 & \textbf{0.014} & 0.007 & 0.010 &  0.008 && 0.007 & 0.007 & 0.006 & 0.007\\
    MNIST-10 & \textbf{0.128} & 0.117 & 0.064 & 0.050 && 0.203 & \textbf{0.247} & 0.093 & 0.100 \\
    Phoneme & \textbf{0.154} & 0.009 & 0.061 & 0.072 && \textbf{0.051} & 0.035 & 0.035 & 0.030 \\
  \bottomrule
\end{tabular}
\vspace*{1mm}
\end{table}

\begin{figure*}
    \centering
    \vspace*{-1mm}
    \begin{minipage}[t]{\linewidth}
        \begin{subfigure}{0.19\linewidth}
            \centering
            \includegraphics[width=\textwidth]{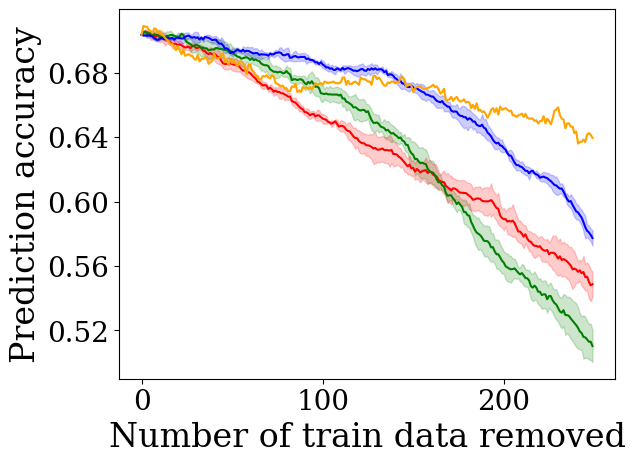}
            \caption{CIFAR10}\label{fig:removal-image1}
        \end{subfigure}%
        \hfill
        \begin{subfigure}{0.19\linewidth}
            \centering
            \includegraphics[width=\textwidth]{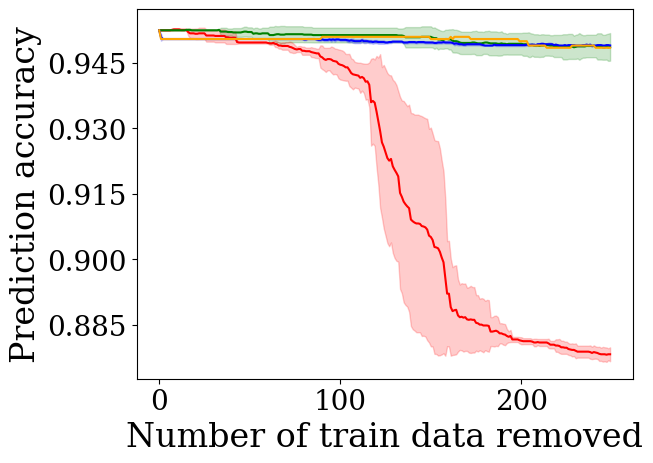}
            \caption{Click}\label{fig:removal-image2}
        \end{subfigure}
        \hfill
        \begin{subfigure}{0.19\linewidth}
            \centering
            \includegraphics[width=\textwidth]{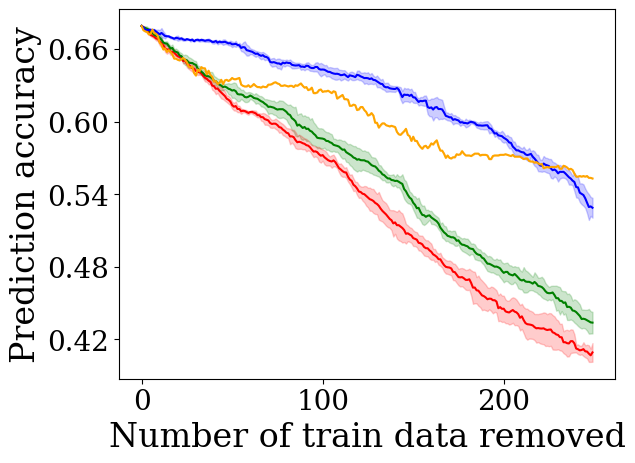}
            \caption{Covertype}\label{fig:removal-image3}
        \end{subfigure}
        \begin{subfigure}{0.19\linewidth}
            \centering
            \includegraphics[width=\textwidth]{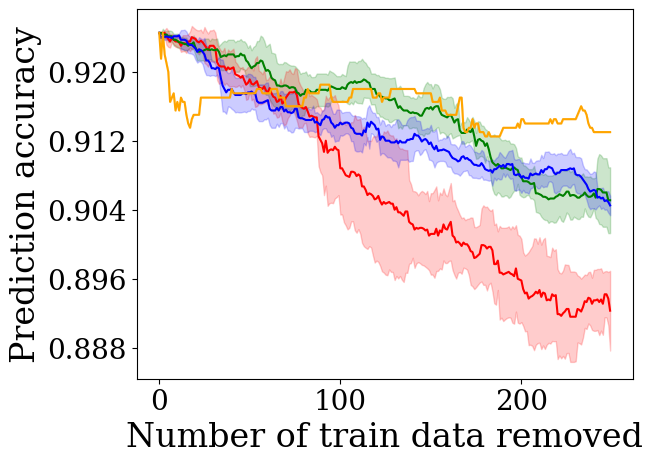}
            \caption{CPU}\label{fig:removal-image4}
        \end{subfigure}%
        \hfill
        \begin{subfigure}{0.19\linewidth}
            \centering
            \includegraphics[width=\textwidth]{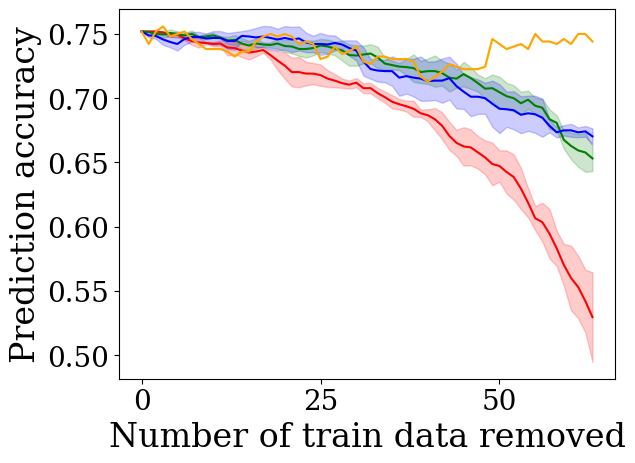}
            \caption{Diabetes}\label{fig:removal-image5}
        \end{subfigure}
        \hfill
            \begin{subfigure}{0.19\linewidth}
            \centering
            \includegraphics[width=\textwidth]{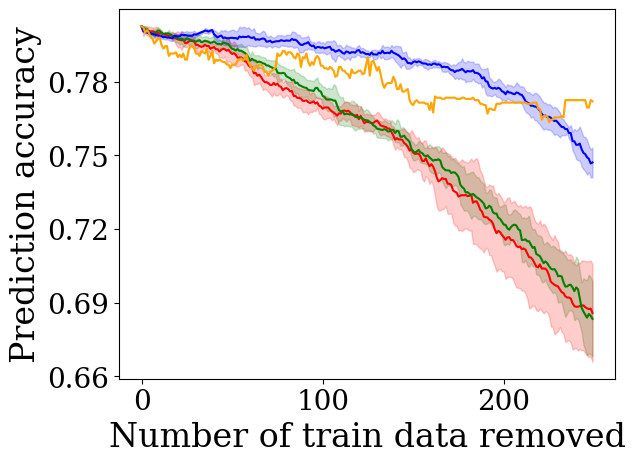}
            \caption{FMNIST}\label{fig:removal-image6}
        \end{subfigure}
        \hfill
        \begin{subfigure}{0.19\linewidth}
            \centering
            \includegraphics[width=\textwidth]{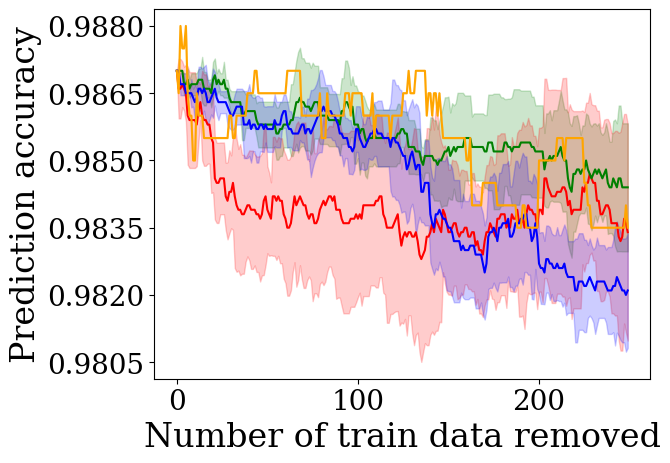}
            \caption{MNIST-2}\label{fig:removal-image7}
        \end{subfigure}%
        \hfill
        \begin{subfigure}{0.19\linewidth}
            \centering
            \includegraphics[width=\textwidth]{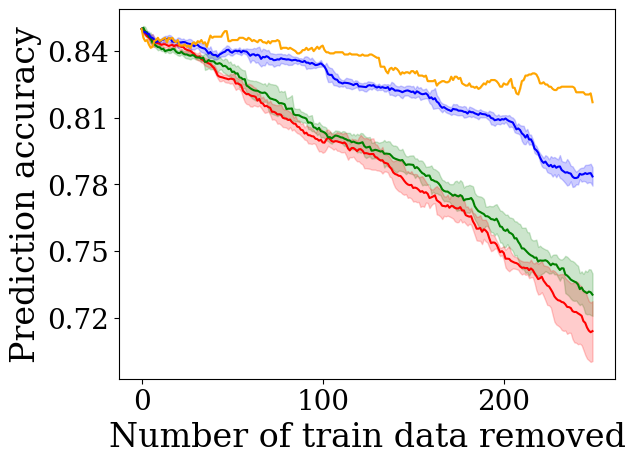}
            \caption{MNIST-10}\label{fig:removal-image8}
        \end{subfigure}
        \hfill
            \begin{subfigure}{0.19\linewidth}
            \centering
            \includegraphics[width=\textwidth]{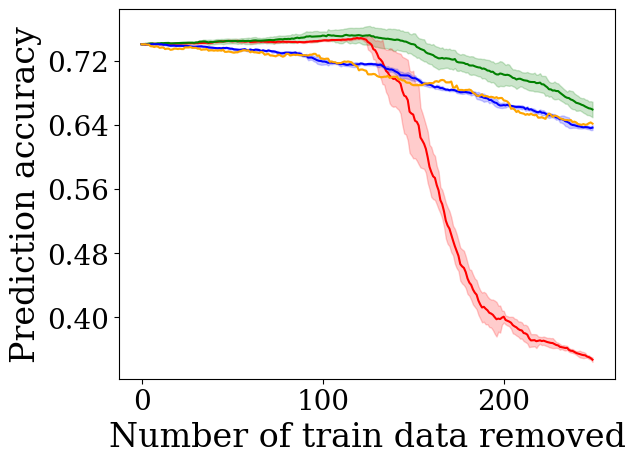}
            \caption{Phoneme}\label{fig:removal-image9}
        \end{subfigure}
        \hfill
            \begin{subfigure}{0.19\linewidth}
            \centering
            \includegraphics[width=\textwidth]{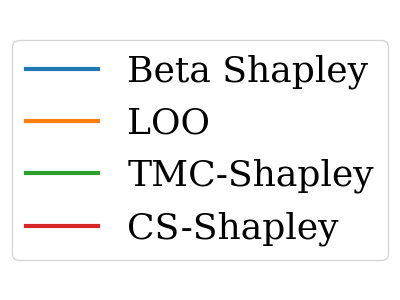}
            \begin{minipage}{.1cm}
            \vfill
            \end{minipage}
        \end{subfigure}
    \end{minipage}%
    \hfill
    \begin{minipage}[c]{\linewidth}
        \caption{\label{fig:removal-grid}Performance across datasets when removing high-value instances for logistic regression.}
    \end{minipage}
\vspace*{-4mm}
\end{figure*}

\paragraph{Weighted Accuracy Drop.} 
An effective metric needs to evaluate two components underlying removal performance: 1) the total accuracy drop resulting from each valuation method, and 2) how quickly the drop in accuracy was achieved. Intuitively, the higher the relative value ranking of a data point, the more weight its impact on model performance should hold. We can therefore define the \textit{weighted accuracy drop} (\textsc{wad}) as the summation of the cumulative accuracy drop at each removal step, weighed by the reciprocal of 
the removal step (i.e. reciprocal of the rank). Formally, for a training set $T=\{(x_i, y_i)\}_1^n$ sorted from the highest to the lowest value we have:
$$
\textsc{wad}_{T} = \sum_{j=1}^{n}\left( \frac{1}j\sum_{i=1}^{j} a_{T_{-\{1:i-1\}}}(D) - a_{T_{-\{1:i\}}}(D)\right)
$$
where $T_{-\{1:i\}}$ represents the training set with the first $i$ instances removed based on the data valuation rank. When $i=1$, $a_{T_{-\{1:i-1\}}}(D)=a_{T_{-\emptyset}}(D)$ equals the predictive accuracy with the full training set $T$. In effect, this enables us to assign high importance to the highest-ranked data points while still capturing the overall performance across removals, as depicted in the plots.

\paragraph{Results.} We report the weighted accuracy drop using logistic regression and SVM with the RBF kernel across datasets in \autoref{table:removal-combined} and plot the removal performance of logistic regression in \autoref{fig:removal-grid}. 
As shown, our method outperforms the baseline methods in most of the settings. Similar results are observed for the other two classifiers, as shown in Appendix C.
This demonstrates the efficacy of using a value function that discriminates between in-class and out-of-class accuracy. For the SVM-RBF results on the Click dataset, we observe the identical performance across methods. Whereas prior work has used artificially balanced datasets, we performed stratified sampling to maintain the label distribution. In the case of Click, the dataset is highly imbalanced and SVM usually needs additional tricks to work on highly-imbalanced datasets \citep{grandvalet2005probabilistic}.

\subsection{Noisy label detection}
\label{subsec:noise}
To generate noisy training data, we shuffle the labels of a random 20\% of the training data. 
We compute value estimates on the noised training sets using each valuation method and then simulate manual inspection by checking data labels from lowest value to highest value. 
The expectation is that an effective data valuation method will assign low values to mislabeled instances relative to the correctly labeled instances \citep{ghorbani2019data}. 
In our work, we use a rank-based approach to directly evaluate performance and visualize the retrieval results with a precision-recall (PR) curve. In addition, we also compute the Area Under the Curve (AUC) of the PR curve for quantitative results.

\begin{table}[t!]
\caption{\label{table:auc-combined} Area Under the Curve (AUC) for Logistic Regression and SVM-RBF using \algname (CS), TMC-Shapley (TMC), Beta Shapley (Beta), and Leave-One-Out (LOO).}
\centering
\small
\vspace*{2mm}
\begin{tabular}{lrrrrrrrrr}
  \toprule
  \multirow{2}{*}{Dataset} & \multicolumn{4}{c}{Logistic Regression} && \multicolumn{4}{c}{SVM-RBF}\\
  \cmidrule{2-5}
  \cmidrule{7-10}
     & CS & TMC & Beta & LOO && CS & TMC & Beta & LOO\\
    \midrule
    CIFAR10 & \textbf{0.450} & 0.429 & 0.424 & 0.275 && \textbf{0.387} & 0.317 & 0.321 & 0.272\\
    Click & \textbf{0.816} & 0.689 & 0.797 & 0.149 && \textbf{0.855} & 0.769 & 0.789 & 0.200\\
    Covertype & 0.706 & \textbf{0.766} & 0.653 & 0.179 && \textbf{0.712} & 0.618 & 0.600 & 0.196\\
    CPU & \textbf{0.785} & 0.779 & 0.654 & 0.207 && \textbf{0.808} & 0.671 & 0.516 & 0.189\\
    Diabetes & \textbf{0.441} & 0.355 & 0.435 & 0.194 && \textbf{0.412} & 0.362 & 0.400 & 0.210\\
    FMNIST & \textbf{0.570} & 0.554 & 0.552 & 0.340 && \textbf{0.512} & 0.382 & 0.412 & 0.239\\
    MNIST-2 & \textbf{0.831} & 0.815 & 0.806 & 0.280 && \textbf{0.837} & 0.663 & 0.611 & 0.300\\
    MNIST-10 & 0.877 & \textbf{0.933} & 0.845 & 0.371 && 0.674 & \textbf{0.747} & 0.510 & 0.254\\
    Phoneme & \textbf{0.575} & 0.535 & 0.416 & 0.222 && \textbf{0.579} & 0.555 & 0.496 & 0.255 \\
  \bottomrule
\end{tabular}
\vspace*{-3mm}
\end{table}

\paragraph{Results.} We report AUC for logistic regression and SVM-RBF in \autoref{table:auc-combined}.
Similar to the removal experiments, our method has the best overall performance. We do note slightly weaker performance on the multi-class datasets compared to the removal experiments (see rows for Covertype and MNIST-10 in \autoref{table:removal-combined} and \autoref{table:auc-combined}). 
This could be attributable to the simple sampling strategy of constructing $S_{-y_i}$.
This suggests that in a multi-class setting, our method may benefit from increasing the minimum number of out-of-class samples. 

\subsection{Transferability of data values}\label{subsec:transfer}
Even with approximation, Shapley values can be computationally expensive to compute for larger models. 
For example, the experiments in \autoref{subsec:removal} had a 1:120 runtime ratio between the quickest (Diabetes) and longest (Covertype) running datasets on logistic regression. 
This would have scaled to nearly 4-months to run MLP on Covertype.\footnote{See Appendix A for information on compute resources.}
It is therefore of great interest to understand to what extent Shapley values computed with a simple classifier can be transferred to other models, such as neural networks. 
In prior work, \citet{jia2021scalability} demonstrated the efficacy of a specific case by using a KNN trained over pre-trained embeddings as a surrogate classifier for several target learning models. 
We generalize this idea and try to answer the question: \emph{to what extent can Shapley-based data values computed with various simple classifiers be transferred and applied to other classifiers?}

\begin{figure*}
    \centering
    \begin{minipage}[t]{\linewidth}
        \begin{subfigure}{0.19\linewidth}
            \centering
            \includegraphics[width=\textwidth]{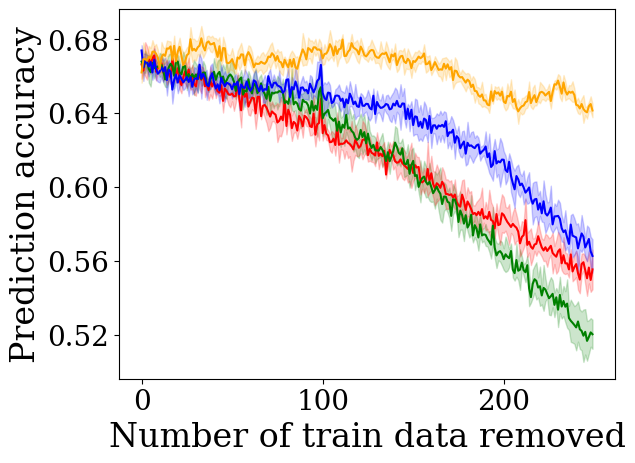}
            \caption{CIFAR10}\label{fig:transfer-image1}
        \end{subfigure}%
        \hfill
        \begin{subfigure}{0.19\linewidth}
            \centering
            \includegraphics[width=\textwidth]{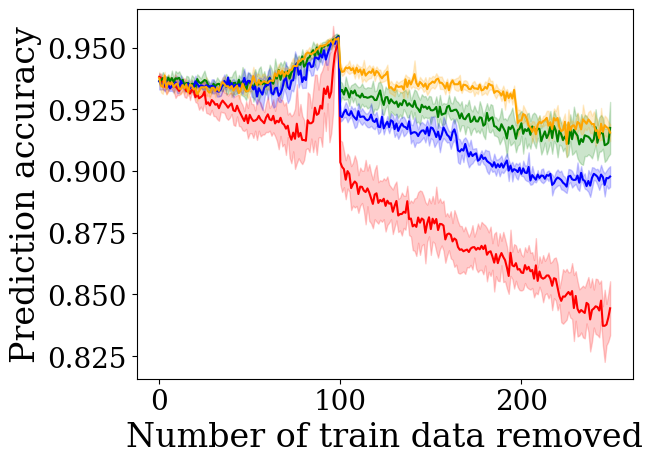}
            \caption{Click}\label{fig:transfer-image2}
        \end{subfigure}
        \hfill
        \begin{subfigure}{0.19\linewidth}
            \centering
            \includegraphics[width=\textwidth]{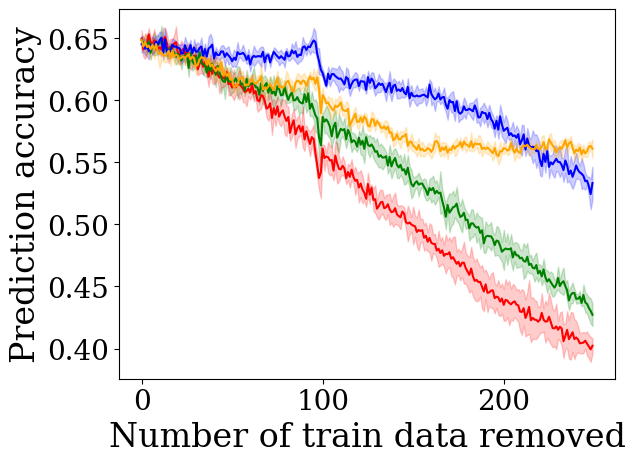}
            \caption{Covertype}\label{fig:transfer-image3}
        \end{subfigure}
        \begin{subfigure}{0.19\linewidth}
            \centering
            \includegraphics[width=\textwidth]{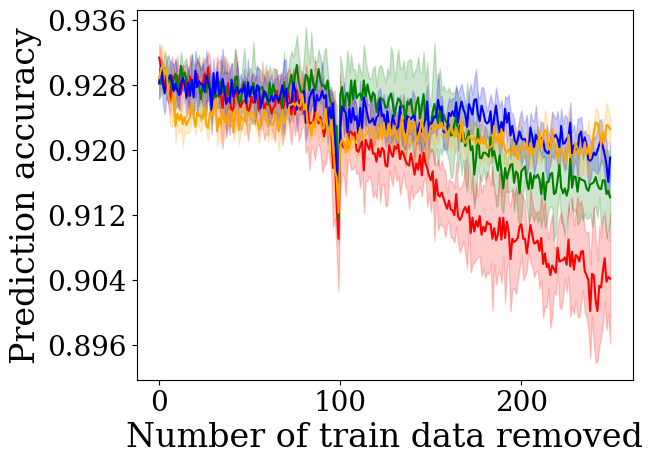}
            \caption{CPU}\label{fig:transfer-image4}
        \end{subfigure}%
        \hfill
        \begin{subfigure}{0.19\linewidth}
            \centering
            \includegraphics[width=\textwidth]{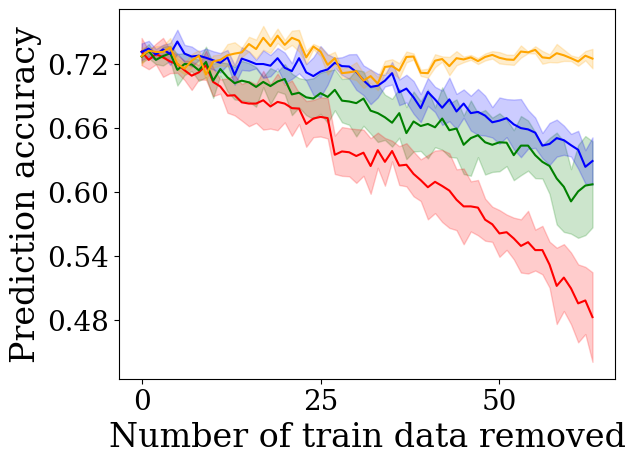}
            \caption{Diabetes}\label{fig:transfer-image5}
        \end{subfigure}
        \hfill
            \begin{subfigure}{0.19\linewidth}
            \centering
            \includegraphics[width=\textwidth]{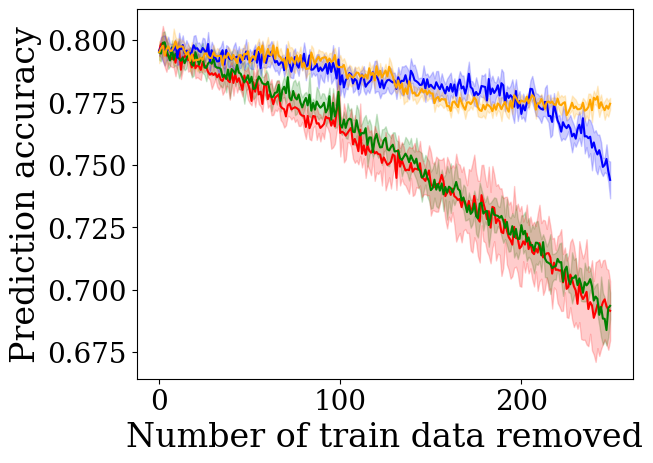}
            \caption{FMNIST}\label{fig:transfer-image6}
        \end{subfigure}
        \hfill
        \begin{subfigure}{0.19\linewidth}
            \centering
            \includegraphics[width=\textwidth]{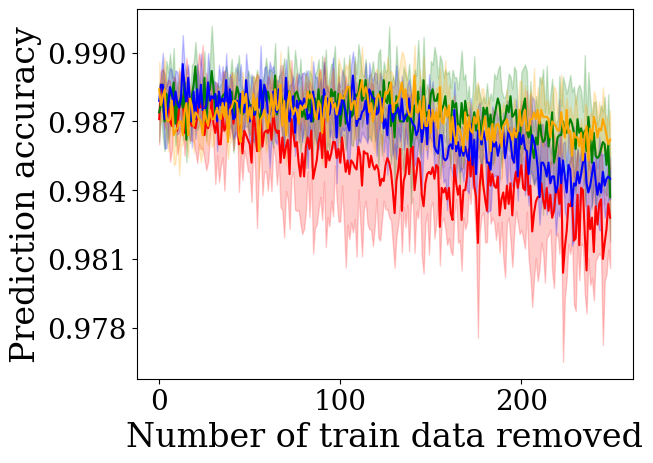}
            \caption{MNIST-2}\label{fig:transfer-image7}
        \end{subfigure}%
        \hfill
        \begin{subfigure}{0.19\linewidth}
            \centering
            \includegraphics[width=\textwidth]{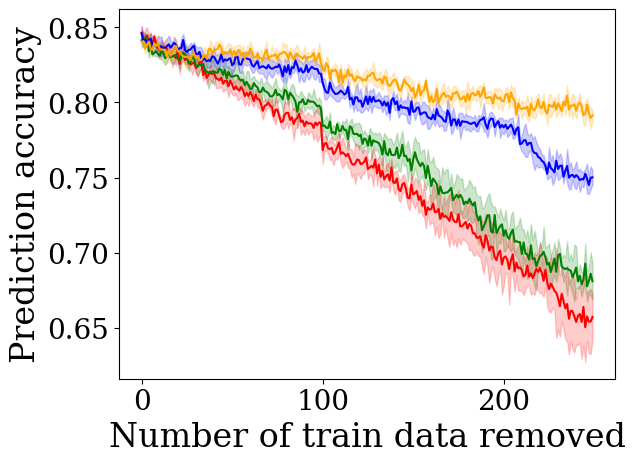}
            \caption{MNIST-10}\label{fig:transfer-image8}
        \end{subfigure}
        \hfill
            \begin{subfigure}{0.19\linewidth}
            \centering
            \includegraphics[width=\textwidth]{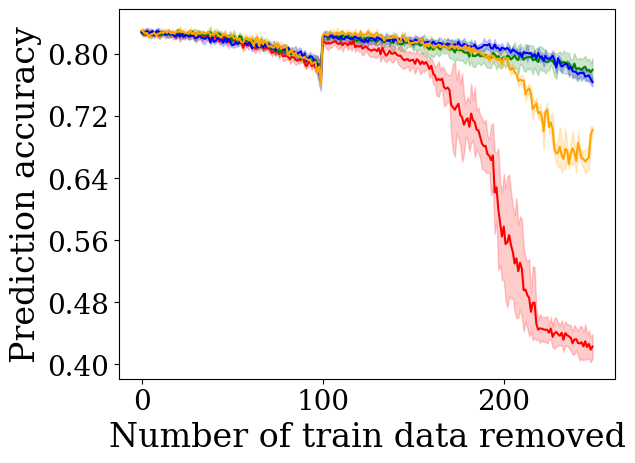}
            \caption{Phoneme}\label{fig:transfer-image9}
        \end{subfigure}
        \hfill
            \begin{subfigure}{0.19\linewidth}
            \centering
            \includegraphics[width=\textwidth]{figures/legend.png}
            \begin{minipage}{.1cm}
            \vfill
            \end{minipage}
        \end{subfigure}
    \end{minipage}%
    \hfill
    \begin{minipage}[c]{\linewidth}
        \caption{\label{fig:transfer-grid}Performance across datasets when transferring values from logistic regression to MLP for the high-value removal task.}
    \end{minipage}
\vspace*{-4mm}
\end{figure*}

To answer this question, we use each of the four classifiers in \autoref{subsec:removal} as the ``source'' classifiers and evaluate the computed data values with other ``target'' classifiers on the data removal task. In addition to the four classifiers, we also include an MLP classifier in this evaluation, for which the computational cost of TMC-Shapley was prohibitively large during our preliminary experiments.
Specifically, for data values computed with a source classifier on a given dataset, at each removal timestep we remove an instance, retrain the target classifier, and evaluate predictive performance as in the original removal experiments.
In this experiment, we would like to answer two questions: (1) is there a similar pattern of removal performance on the target classifiers as on the source classifiers; and (2) which source classifier and data valuation method causes the greatest performance drop on target classifiers, as this would indicate high applicability in a real world setting?

\paragraph{Results.} \autoref{fig:transfer-grid} shows transfer of logistic regression to MLP across all datasets, and we refer the reader to \autoref{fig:removal-grid} for the source removal plots. Our results suggest that in general, Shapley-based data values are transferable across classifiers. Specifically, across methods the overall pattern of performance drop from source to target classifier is closely aligned. While these results demonstrate that Shapley-based data value estimates are transferable from simpler models even to neural models, they also suggest that the valuation performance on the source classifier can be used as an indicator of how well the performance would be on a target classifier. As an implication of this, hyperparameter tuning to achieve better source performance may lead to even better transferability results. We leave this to future work. Additionally, this has implications for being able to gain the benefits of application (such as training data selection) for large neural networks. Further, this transferability may indicate that Shapley values capture some implicit data features that are generally beneficial or harmful to learning models. 
We leave it to future work to empirically test this. Finally, as a result of this transferability, we also observe that since our method outperformed other methods on the source classifier, \algname also outperforms when transferred across classifiers, and overall, logistic regression is highly-effective as a source classifier.

\section{Conclusion}
\label{sec:con}
In this work, we propose \algname, a Shapley value with a new value function that discriminates between
training instances’ in-class and out-of-class contributions. 
Our theoretical analysis shows the proposed value function is (essentially) the unique function that satisfies two desirable
properties for evaluating data values in classification. 
Further, our experiments demonstrate the effectiveness of \algname over existing methods on high-value data removal, noisy label detection, and data value transferability. 
Currently, the proposed method only works on classification problems. In future work, we will explore the possibility of extending the a similar idea to regression.

\section*{Acknowledgements}
This material is based upon work supported by the National Science Foundation under Grant No. SaTC 2124538 and an Amazon Research Award to Yangfeng Ji. We thank the anonymous reviewers for their helpful feedback and suggestions.
We also thank Hanjie Chen and Wanyu Du, as well as the rest of the UVA Information and Language Processing Lab, for engaging discussions about this work.

\bibliography{references.bib}
\section*{Checklist}

\begin{enumerate}

\item For all authors...
\begin{enumerate}
  \item Do the main claims made in the abstract and introduction accurately reflect the paper's contributions and scope?
    \answerYes{For theoretical results, see \autoref{sec:theory} and Appendix B. For empirical results, see \autoref{sec:exp} and Appendix C.}
  \item Did you describe the limitations of your work?
    \answerYes{See \autoref{sec:con}.}
  \item Did you discuss any potential negative societal impacts of your work?
    \answerNA{We do not anticipate any potential negative societal impacts of this work.}
  \item Have you read the ethics review guidelines and ensured that your paper conforms to them?
    \answerYes{}
\end{enumerate}

\item If you are including theoretical results...
\begin{enumerate}
  \item Did you state the full set of assumptions of all theoretical results?
    \answerYes{See \autoref{sec:theory}.}
        \item Did you include complete proofs of all theoretical results?
    \answerYes{See \autoref{sec:theory}.}
\end{enumerate}

\item If you ran experiments...
\begin{enumerate}
  \item Did you include the code, data, and instructions needed to reproduce the main experimental results (either in the supplemental material or as a URL)?
    \answerYes{See URL for code repository.}
  \item Did you specify all the training details (e.g., data splits, hyperparameters, how they were chosen)?
    \answerYes{See \autoref{sec:exp} and Appendix A.}
        \item Did you report error bars (e.g., with respect to the random seed after running experiments multiple times)?
    \answerYes{We report standard deviation for all experimental results.}
        \item Did you include the total amount of compute and the type of resources used (e.g., type of GPUs, internal cluster, or cloud provider)?
    \answerYes{See Appendix A.}
\end{enumerate}

\item If you are using existing assets (e.g., code, data, models) or curating/releasing new assets...
\begin{enumerate}
  \item If your work uses existing assets, did you cite the creators?
    \answerYes{All sources provided in \autoref{sec:exp} and/or Appendix A.}
  \item Did you mention the license of the assets?
    \answerYes{License information can be found in Appendix A.}
  \item Did you include any new assets either in the supplemental material or as a URL?
    \answerYes{See URL for code repository.}
  \item Did you discuss whether and how consent was obtained from people whose data you're using/curating?
    \answerYes{For existing data sources, we provide the sources for all datasets through which any consent information, if provided by the cited dataset creators, can be found; we state this in Appendix A.}
  \item Did you discuss whether the data you are using/curating contains personally identifiable information or offensive content?
    \answerYes{We make mention that the datasets we selected do not contain offensive or personally identifiable information in Appendix A.}
\end{enumerate}

\item If you used crowdsourcing or conducted research with human subjects...
\begin{enumerate}
  \item Did you include the full text of instructions given to participants and screenshots, if applicable?
    \answerNA{}
  \item Did you describe any potential participant risks, with links to Institutional Review Board (IRB) approvals, if applicable?
    \answerNA{}
  \item Did you include the estimated hourly wage paid to participants and the total amount spent on participant compensation?
    \answerNA{}
\end{enumerate}

\end{enumerate}
\newpage
\appendix

\end{document}